\newtheorem{theorem}{Theorem}[section]
\newtheorem{lemma}[theorem]{Lemma}
\newtheorem{proposition}[theorem]{Proposition}
\theoremstyle{definition}
\newtheorem{definition}[theorem]{Definition}
\newtheorem{Corollary}[theorem]{Corollary}
\newtheorem{remark}[theorem]{Remark}
\newtheorem{notation}[theorem]{Notation}
\newtheorem{example}[theorem]{Example}
\title{On the space of coefficients of a Feed Forward Neural Network}
\author{Dinesh Valluri\footnote{dvalluri@uwo.ca} }
\author{Rory Campbell\footnote{rcamp@uwo.ca}}
\date{}
\affil{Department of Computer Science, The Univerity of Western Ontario}
\begin{document}

\maketitle

\begin{abstract}
We define and establish the conditions for `equivalent neural networks' - neural networks with different weights, biases, and threshold functions which result in the same associated function. We prove that, given a neural network $\mathcal{N}$ with piece-wise linear activation, the space of coefficients describing all equivalent neural networks is given by a semialgebraic set. This result is obtained by studying different representations of a given piece-wise linear function using the Tarski-Seidenberg theorem. 

\iffalse
This result is obtained, in part, by studying the `relevant' and `redundant' terms in piece-wise linear functions.
\fi
\end{abstract}

\section{Introduction}
Neural networks are a critical component of AI research and industry. They are a prominent part of modern research in self-driving cars \cite{bojarski2016end}, fraud detection \cite{zakaryazad2016profit}, medical diagnostics \cite{amato2013artificial}, and recommendation systems for popular destinations on the web \cite{covington2016deep}. The prevalence of these techniques necessitates a stronger understanding of their inner workings to better harness their potential and ensure their correct use.

The power of neural networks and the desire for an understanding of these tools have fostered a field of study surrounding `explainable AI'. Explainable AI is useful for examining ML results in a scientific context \cite{roscher2020explainable} and for explaining ML results to stakeholders in enterprise contexts \cite{bhatt2020explainable}. Machine learning engineers looking to debug their work also benefit from research into explainable ML, which has been a partial motivation behind the development of tools such as LIME \cite{ribeiro2016should}.

In addition to creating tools for explainable AI, there has been an increase in research into understanding the mathematics of neural networks. One direction of this research is examining neural networks with piecewise linear activation functions \cite{arora2016understanding}. The other being Zhang et. al \cite{zhang2018tropical}, which draws an explicit connection between tropical rational functions and feed-forward neural networks with piecewise linear activation functions.

In this paper we study the space of coefficients of a feedforward neural network with piece-wise linear activation functions. The key idea is that one might have different weights, biases, and thresholds for a neural architecture, yet resulting in the same associated function. This leaves the question of characterizing the space of coefficients of a neural network whose associated function is fixed. We answer this question by deducing that the space of such coefficients is given by a semialgebraic subset of $\mathbb{R}^{N}$, for some $N$. %An algorithmic approach to computing such semialgebraic sets is presented and demonstrated with the example of an Autoencoder.
 
In section 2 we begin by recalling some basic aspects of semialgebraic sets necessary for the main result. This consists of a version of the Tarski-Seidenberg theorem and an application relevant to the key theorem. In section 3, we define the notion of a feed-forward neural network. Unlike in some other sources what we mean by a \emph{neural network} is the data of weights, biases, and threshold vectors on a neural architecture i.e., a directed acyclic graph. We associate a function to the neural network by composing the activation functions as done in say \cite{zhang2018tropical}. We say that two neural networks are equivalent if their associated functions are identical. In section 4, we study the algebraic nature of these equivalence classes. In particular, using the Tarski-Seidenberg theorem we deduce that each equivalence class is given by a semialgebraic set. %We remark that the polynomials defining the semialgebraic relations can be computed using the algorithm in section 2. 

\section*{Acknowledgements}
The authors would like to thank Dr.~Mark Daley for pointing towards the literature which inspired this research. This work builds on notions developed in Zhang et. al \cite{zhang2018tropical}. This work was done in affiliation with the Computational Convergence Lab at The University of Western Ontario; we extend our thanks to all its members.

\section{Semialgebraic Sets}

A \emph{semialgebraic} set is a subset of $\mathbb{R}^n$ satisfying a finite number of polynomial equations and inequations with coefficients in $\mathbb{R}$ \cite{coste2000introduction}. Note that unlike algebraic sets the class of semialgebraic sets form a Boolean algebra, i.e., they are closed under union, intersection and complement in $\mathbb{R}^n$. Now we build the set-up necessary to state a version of Tarski-Seidenberg theorem \cite{mishra1993algorithmic}.

\begin{definition}
A \emph{first-order formula} is constructed in the following manner, as defined in \cite{coste2000introduction}:
\begin{enumerate}
    \item If $P\in\mathbb{R}[X_{1},...,X_{n}]$ then $P=0$ and $P>0$ are first-order formulae.
    \item If $\Phi$ and $\Psi$ are first-order formulae, then ``$\Phi$ and $\Psi$", ``$\Phi$ or $\Psi$", ``not $\Phi$" (often denoted by $\Phi\land\Psi$, $\Phi\lor\Psi$ and $\neg\Phi$, respectively) are first-order formulae.
    \item If $\Phi$ is a formula and $X$, a variable ranging over $\mathbb{R}$, then $\exists X\Phi$ and $\forall X\Phi$ are first-order formulae.
\end{enumerate}
\end{definition}

The following version of Tarski-Seidenberg theorem allows us to eliminate quantifiers from first-order formulae. We apply this theorem to a first-order formula involving linear inequalities. These inequalities arise naturally in our study of feed forward neural networks with ReLu activation as shown in section 4.  

\begin{figure*}
\centering
    \subfloat[\centering label 1]{{\includegraphics[width=5cm]{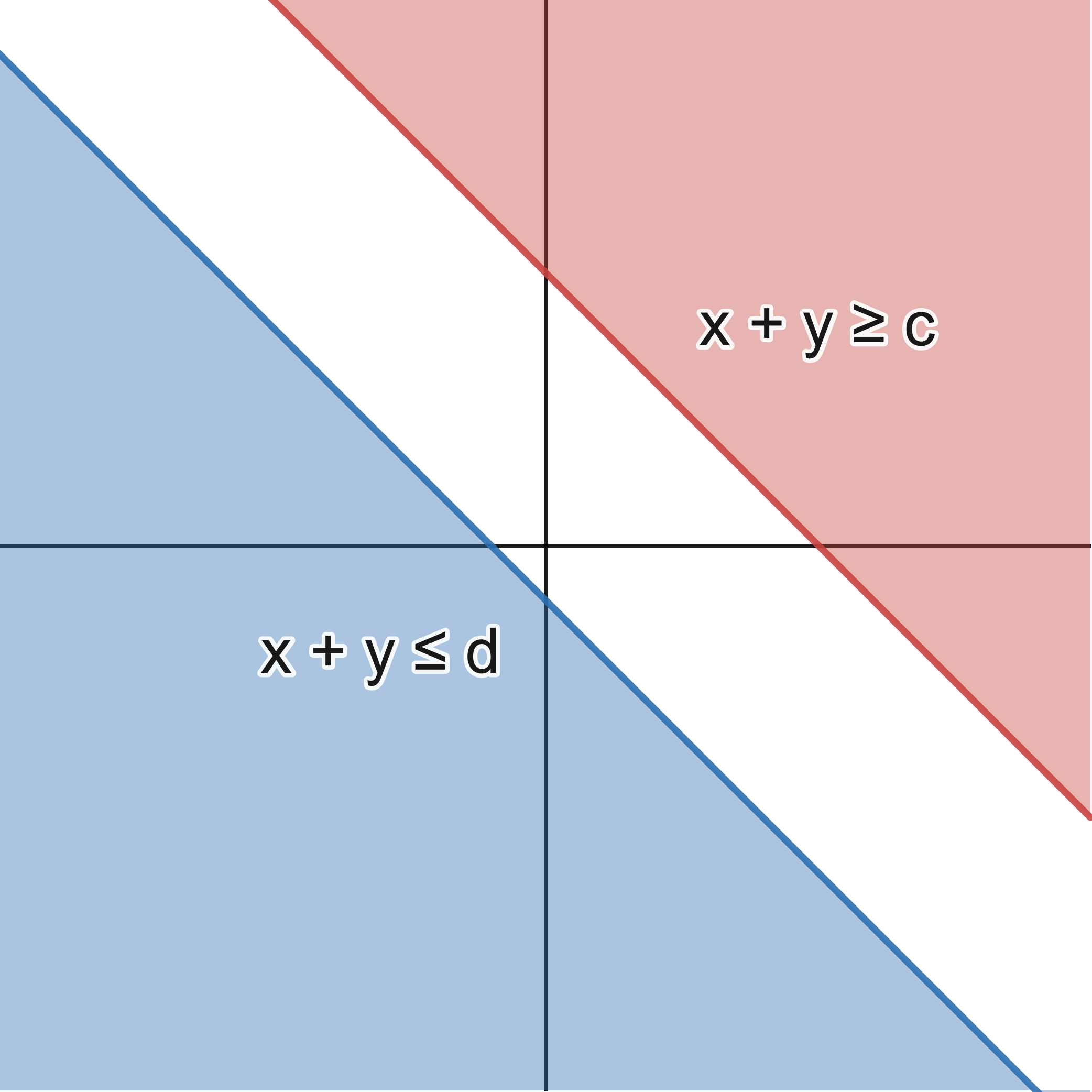} }}%
    \qquad
    \subfloat[\centering label 2]{{\includegraphics[width=5cm]{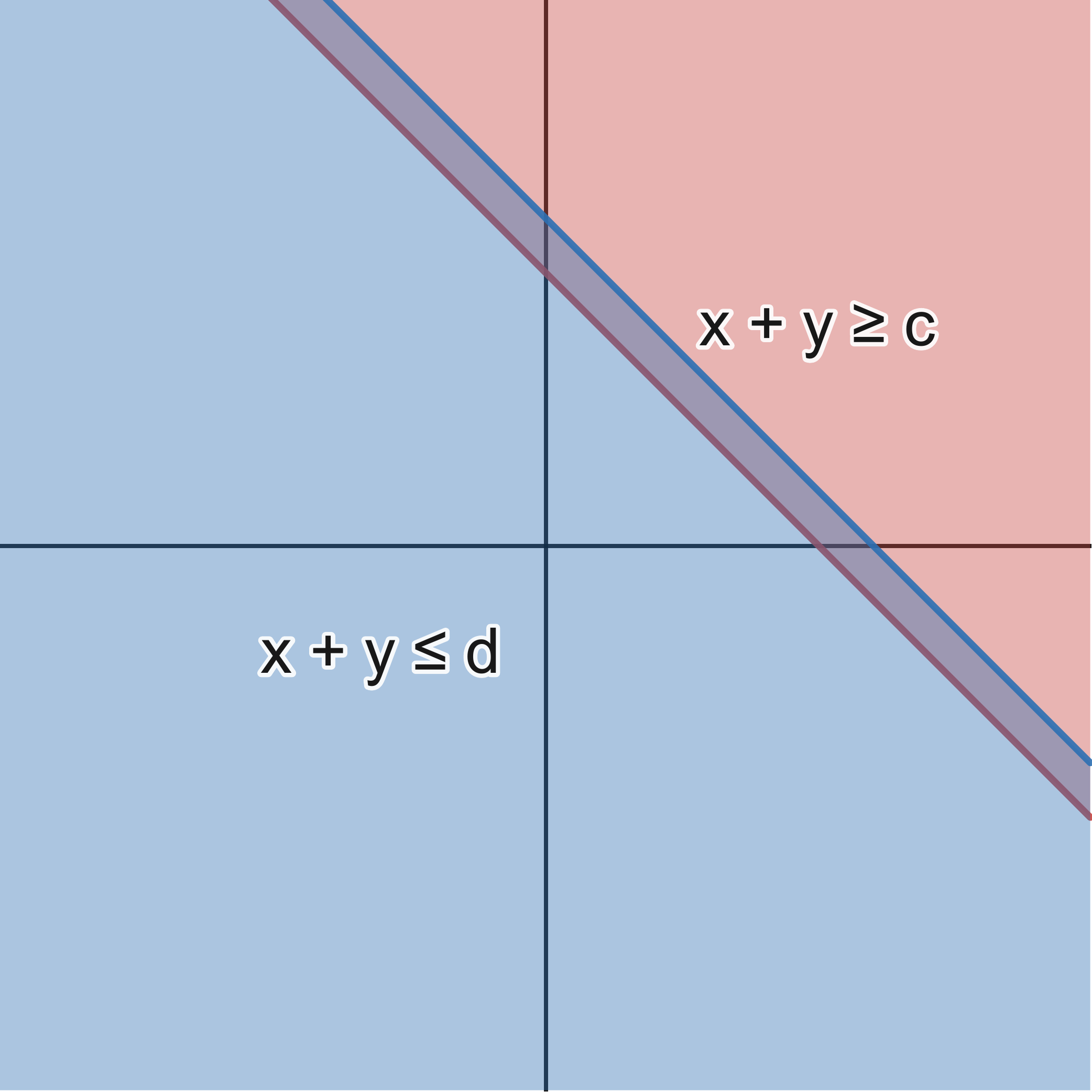} }}%
    \caption{Visual for Example 2.4}%
    \label{fig:example}%
\end{figure*}

\begin{theorem}[Tarski-Seidenberg Theorem]
If $\Phi(X_{1},...,X_{n})$ is a first-order formula, the set of $(x_{1},...,x_{n})\in\mathbb{R}^{n}$ which satisfy $\Phi(x_{1},...,x_{n})$ is semialgebraic.
\end{theorem}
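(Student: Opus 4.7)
The plan is to prove the theorem by structural induction on the first-order formula $\Phi$, following the three-clause definition given just above. The induction hypothesis will be: for every first-order formula $\Psi(Y_{1},\dots,Y_{m})$ with fewer logical symbols than $\Phi$, the realization set $\{y\in\mathbb{R}^m : \Psi(y)\text{ holds}\}$ is semialgebraic.

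For the base case, $\Phi$ is either $P=0$ or $P>0$ for some $P\in\mathbb{R}[X_{1},\dots,X_{n}]$; in both cases the realization set is semialgebraic by the very definition given at the start of Section~2. For the Boolean step, if $\Phi$ is one of $\Psi_{1}\land\Psi_{2}$, $\Psi_{1}\lor\Psi_{2}$, or $\neg\Psi_{1}$, then the realization set of $\Phi$ is the intersection, union, or complement (respectively) of the realization sets of the $\Psi_i$, which are semialgebraic by the inductive hypothesis. Since the paper has already noted that semialgebraic sets form a Boolean algebra, the conclusion follows immediately in these cases.

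The genuine obstacle is the quantifier step. Here I would first reduce $\forall$ to $\exists$ via $\forall X\,\Psi\equiv\neg\exists X\,\neg\Psi$, so that it suffices to handle $\Phi(X_{1},\dots,X_{n})=\exists X_{n+1}\,\Psi(X_{1},\dots,X_{n+1})$. By the inductive hypothesis, the set $S\subseteq\mathbb{R}^{n+1}$ cut out by $\Psi$ is semialgebraic, and the realization set of $\Phi$ is precisely the image $\pi(S)\subseteq\mathbb{R}^{n}$ under the coordinate projection $\pi:\mathbb{R}^{n+1}\to\mathbb{R}^{n}$ that forgets $X_{n+1}$. The whole theorem therefore reduces to the geometric projection statement: the image of a semialgebraic set under a coordinate projection is semialgebraic.

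To establish this projection statement I would reduce $S$ to a finite union of basic semialgebraic pieces of the form $\{P_{1}=\dots=P_{k}=0,\,Q_{1}>0,\dots,Q_{\ell}>0\}$ and analyze, as the parameters $X_{1},\dots,X_{n}$ vary, how the real roots of the one-variable polynomials $P_{i}(x_{1},\dots,x_{n},X_{n+1})$ interlace the sign-change loci of the $Q_{j}$. The core technical tool is a sign-determination argument for univariate polynomials with parametric coefficients: Sturm-style counting (or, equivalently, a cylindrical algebraic decomposition of $\mathbb{R}^{n+1}$ adapted to the $P_i,Q_j$) expresses the existence of a witness $X_{n+1}$ through finitely many polynomial sign conditions on the leading coefficients, discriminants, and resultants of the $P_i,Q_j$ in $X_{n+1}$. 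These conditions are themselves polynomial inequalities in $X_{1},\dots,X_{n}$, so $\pi(S)$ is cut out by finitely many polynomial (in)equalities and is semialgebraic. Carrying out this sign-counting uniformly in the parameters is the hard part; everything else in the argument is bookkeeping over the structure of $\Phi$.
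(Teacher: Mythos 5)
First, note that the paper offers no proof of this statement at all: it is imported as a classical result, cited to Coste and to Mishra, and used as a black box. So there is nothing in the paper to compare your argument against step by step; what can be assessed is whether your outline would actually constitute a proof. Your scaffolding is the standard one and is correct as far as it goes: the base cases are semialgebraic by definition, the Boolean cases follow from the fact (already noted in Section 2) that semialgebraic sets form a Boolean algebra, and the quantifier case reduces, via $\forall X\,\Psi \equiv \neg\exists X\,\neg\Psi$, to the single statement that the image of a semialgebraic subset of $\mathbb{R}^{n+1}$ under the coordinate projection to $\mathbb{R}^{n}$ is semialgebraic. This is exactly how the cited sources organize the proof.

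The genuine gap is that this projection statement \emph{is} the Tarski--Seidenberg theorem; everything you prove before reaching it is routine, and everything after you assert rather than prove. Saying that Sturm-style sign counting ``expresses the existence of a witness $X_{n+1}$ through finitely many polynomial sign conditions on the leading coefficients, discriminants, and resultants'' names the right toolbox but does not carry out the argument: one must handle the case splitting on which leading coefficients vanish (the degree of $P_i$ in $X_{n+1}$ is not constant in the parameters), show that the sign sequence data relevant to the existence of a root in a prescribed sign-invariant interval of the $Q_j$ is itself determined by finitely many polynomial conditions on $X_1,\dots,X_n$, and verify that only finitely many such conditions arise uniformly over all parameter values. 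This uniform sign-determination lemma is the entire technical content of the theorem, and as written your proposal defers it with the sentence ``Carrying out this sign-counting uniformly in the parameters is the hard part.'' As a proof plan it is the correct plan; as a proof it is incomplete precisely at the one step that matters. Since the paper itself treats the theorem as known, the honest options are either to cite the projection theorem as the references do, or to actually execute the sign-determination argument (for instance, the Cohen--H\"ormander style induction on the degrees of the $P_i, Q_j$ in $X_{n+1}$), which is several pages of work, not a paragraph.
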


\iffalse
A fundamental observation about semi algebraic sets is the Tarski-Seidenberg theorem, which states that...  In this paper we use Tarski-Seidenberg theorem and a simple algorithm in its spirit to eliminate quantifiers, presented in \cite{coste2000introduction} as follows

\begin{theorem}[Tarski-Seidenberg Theorem]
Let A be a semialgebraic subset of $\mathbb{R}^{n+1}$ and $\Pi: \mathbb{R}^{n+1} \rightarrow \mathbb{R}^{n}$ , the projection on the first $n$ coordinates. Then $\Pi(A)$ is a semialgebraic subset of $\mathbb{R}^{n}$ which satisfy $\Phi($
\end{theorem}

  The Tarski-Seidenberg theorem can be phrased as a quantifier elimination problem, as in \cite{coste2000introduction}, by considering a system of system of polynomial equations and inequalities $S(T,X) = S_{1}(T,X) \triangleright_{1} 0, S_{2}(T,X) \triangleright_{2} 0,..., S_{l}(T,X)\triangleright_{l}0$, where $T=(T_{1},...,T_{p})$ and $X$ are variables with real coefficients. Tarski-Seidenberg states that ``$\exists X$ such that $S(T,X)$" is equivalent to ``$C_{1}(X) \lor...\lor C_{k}(X)$" where $C_{i}(X)$ is a member of a finite list of polynomial equations and inequalities such that, for every $t \in \mathbb{R}^{p}$, $S(t,X)$ has a solution if and only if one $C_{i}(X)$ is satisfied and that the variable $X$ can be eliminated. 
  
\fi
Let $\{ Q_{k}(x) := \sum\limits_{l=1}^{n} q_{kl}x_{l} + q_{k0} : 1 \leq k \leq m\}$,  be a set of affine-linear functions with $q_{ij} \in \mathbb{R}$.
A straight forward application of the Tarski-Seidenberg theorem tells us that the set equality 
\begin{equation}
    \mathbb{R}^{n} = \bigcup\limits_{k = 1}^{m}\{x \in \mathbb{R}^n : Q_{k}(x) \geq 0\},
\end{equation} is equivalent to $(...,q_{kl},...)_{0 \leq l \leq n, 1 \leq k \leq m} \in \mathbb{R}^{m(n+1)}$ satisfying a semialgebraic set. Indeed, the set equality can be interpreted as saying that the first-order formula   
$$\Phi(...,q_{kl},...) := \forall x \in \mathbb{R}^n [(Q_{1}(x) \geq 0 ) \lor \ldots \lor (Q_{k}(x) \geq 0 )] $$
is true. So the set of all $(...,q_{kl},...)_{0 \leq l \leq n, 1 \leq k \leq m}$ such that $\Phi(...,q_{kl},...) = \mathrm{T}$ is a semialgebraic set by the Tarski-Seidenberg theorem. To summarize this discussion, we have 

\begin{proposition}\label{link}

Let  $ Q_{k}(x) := \sum\limits_{l=1}^{n} q_{kl}x_{l} + q_{k0} : 1 \leq k \leq m$, be a collection of affine linear functions whose coefficients are in $\mathbb{R}$. The set $\{(q_{kl}) \in \mathbb{R}^{m(n+1)} : \text{ for every  } x \in \mathbb{R}^n, Q_{1}(x) \geq 0 \text{ or } \ldots \text{ or } Q_{k}(x) \geq 0 \}$ is semialgebraic.
\end{proposition}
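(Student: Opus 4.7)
The plan is to apply the Tarski-Seidenberg theorem (Theorem 2.3) verbatim, by writing the covering condition as a first-order formula in which the coefficients $q_{kl}$ play the role of free variables and the $x_l$ play the role of bound variables. The discussion immediately preceding the proposition already outlines this, so my job is just to verify that each piece of the formula really is admissible under Definition 2.2.

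I would begin by treating each $Q_k(x) = \sum_{l=1}^{n} q_{kl} x_l + q_{k0}$ as an element of $\mathbb{R}[q_{10}, \ldots, q_{mn}, x_1, \ldots, x_n]$ (in fact, a linear polynomial in these variables). Then by clause (1) of Definition 2.2, both $Q_k(x) = 0$ and $Q_k(x) > 0$ are first-order formulae in the combined variables. By clause (2), the disjunction
\[
\Psi_k := [Q_k(x) = 0] \lor [Q_k(x) > 0]
\]
is first-order, and it expresses $Q_k(x) \geq 0$. Another application of clause (2) gives the finite disjunction $\Psi_1 \lor \cdots \lor \Psi_m$. Finally, applying clause (3) once for each of $x_1, \ldots, x_n$ yields
\[
\Phi(\ldots,q_{kl},\ldots) := \forall x_1 \cdots \forall x_n \bigl[\Psi_1 \lor \cdots \lor \Psi_m\bigr],
\]
which is a first-order formula in the free variables $q_{kl}$. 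The set of $(q_{kl}) \in \mathbb{R}^{m(n+1)}$ satisfying $\Phi$ is exactly the set described in the proposition, because $\Phi$ asserts precisely that every $x \in \mathbb{R}^n$ lies in at least one of the half-spaces $\{Q_k \geq 0\}$. Theorem 2.3 then concludes the argument.

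There is no real obstacle here: the only point that requires care is the bookkeeping of which symbols are free and which are bound, and the observation that $Q_k(x) \geq 0$ must be rewritten as $[Q_k(x)=0] \lor [Q_k(x)>0]$ since Definition 2.2 only admits the atoms $P = 0$ and $P > 0$. Once those formalities are in place, the proposition is an immediate consequence of Tarski-Seidenberg and does not require any further analysis of the geometry of the coefficient space — that deeper analysis is what will be needed later in Section 4, where this proposition is invoked to handle the linear inequalities arising from ReLU activations.
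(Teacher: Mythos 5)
Your proposal is correct and follows essentially the same route as the paper: the paper's own argument writes the covering condition as the first-order formula $\Phi(\ldots,q_{kl},\ldots) := \forall x\,[(Q_{1}(x)\geq 0)\lor\ldots\lor(Q_{k}(x)\geq 0)]$ and invokes the Tarski--Seidenberg theorem. Your additional care in expanding $Q_k(x)\geq 0$ as $[Q_k(x)=0]\lor[Q_k(x)>0]$ and quantifying each $x_l$ separately only makes explicit the bookkeeping the paper leaves implicit.
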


We illustrate the above proposition with an example

\begin{example}
Let $Q_{1}(x) = x + y - c$ and $Q_{2}(x) = -x - y + d$.  The regions defined by $Q_{1}(x) \geq 0$ and $Q_{2}(x) \geq 0$ cover $\mathbb{R}^2$ (see figure 1) if and only if $d\geq c$. Therefore the set $\{(c,d) \in \mathbb{R}^2 : \text{ for every } x \in \mathbb{R}^2 \;\; Q_{1}(x) \geq 0 \text{ or } Q_{2}(x) \geq  0\}$ is semialgebraic.
\end{example}

\section{Neural Networks}

In this section we give a formal definition of a feedforward neural network with piece-wise linear activation function. We simply call it a $\emph{neural network}$ throughout this paper. Except for slight differences, most of the material in this section is well-known or easily deduced from existing literature.

\begin{definition}
A \emph{neural network} is a tuple $\mathcal{N} =  (W^{(k)},b^{(k)},t^{(k)})_{1 \leq k \leq L}$, where $W^{(k)} = (w^{(k)}_{ij})$ are $m_{k} \times n_{k}$-matrices and, $b^{(k)} = (b^{k}_{i})$ and $t^{(k)} = (t^{(k)}_{i})$ are vectors of size $m_{k}$, all with real entries. We require that $m_{k} = n_{k+1}$ for $1 \leq k \leq L-1$. We call $\textbf{W} = \{W^{(k)}\}$ the weight matrices, $\textbf{b} = \{b^{(k)}\}$ the bias vectors, and $\textbf{t} = \{t_{k}\}$ the threshold vectors of the neural network. For a given $k$ we call the data $(W^{(k)},b^{(k)},t^{(k)})$ the $k$-th layer of the neural network $\mathcal{N}$.

\end{definition}

Let $\rho_{k} : \mathbb{R}^{n_{k}} \rightarrow \mathbb{R}^{m_k}$ and $\sigma_k : \mathbb{R}^{m_k} \rightarrow \mathbb{R}^{m_k}$ be the affine linear transformations and the threshold functions given by
$$\rho_{k}(x) = W^{(k)}x+b^{(k)} \text{ and } \sigma_{k}(x) = \max\{x,t^{(k)}\}$$
respectively. 
We may associate a function $\nu: \mathbb{R}^{d} \to \mathbb{R}^{p}$, where $d=n_{1}$ and $p=m_L$ to the neural network  (see \cite{zhang2018tropical}) as follows
\begin{equation*}
\nu=\sigma_{L}\circ\rho_{L}\circ\sigma_{L-1}\circ\rho_{L-1}\circ ... \circ\sigma_{1}\circ\rho_{1}
\end{equation*}

We can consider a simple neural network, such as an Autoencoder \cite{schmidhuber2015deep}, depicted in Figure 1, to clarify our neural network definition. This Autoencoder corresponds to the function $\nu=\sigma_{2}\circ\rho_{2}\circ\sigma_{1}\circ\rho_{1}$ as follows
\begin{align*}
    \sigma_{2} &= \max\{x,t^{(2)}\}, t^{2} = \begin{bmatrix} 1 & 1 & 1 \end{bmatrix}\\
    \rho_{2} &= W^{(2)}x+b^{(2)},  W^{(2)} = \begin{bmatrix} 3 & 4 & -5 \\ \frac{1}{2} & -\frac{3}{4} & 6\end{bmatrix}, b^{2} = \begin{bmatrix} \frac{1}{3} & \frac{1}{3} & \frac{1}{3} \end{bmatrix}\\
    \sigma_{1} &= \max\{x,t^{(1)}\}, t^{1} = \begin{bmatrix} 0 & 0 & 0 \end{bmatrix}\\
    \rho_{1} &= W^{(1)}x+b^{(1)}, W^{(1)} = \begin{bmatrix} 1 & 3 & -7 \\ -\frac{1}{3} & \frac{3}{8} & -6\end{bmatrix}, b^{1} = \begin{bmatrix} 2 & 2 & 2 \end{bmatrix}
\end{align*}

Note that this neural network has two layers by our definition but three layers in the classical sense, as depicted in the figure.

\begin{figure}
\centering
\begin{neuralnetwork}[height=4]
		\newcommand{\nodetextclear}[2]{}
		\newcommand{\nodetextx}[2]{$x_#2$}
		\newcommand{\nodetexty}[2]{$y_#2$}
		\inputlayer[count=3, bias=false, title=Input\\layer, text=\nodetextx]
		\hiddenlayer[count=2, bias=false, title=Hidden\\layer, text=\nodetextclear] \linklayers
		\outputlayer[count=3, title=Output\\layer, text=\nodetexty] \linklayers
	\end{neuralnetwork}
\caption{Example of an Autoencoder.}
\label{fig_m_3}
\end{figure}

\begin{remark} A neural network may have different weight matrices, biases and thresholds but may have identical associated functions. In such a case we want to identify different neural networks with the same associated functions. 
\end{remark}

\begin{definition}[Neural Network Equivalence]
We say that two neural networks $\mathcal{N}_{1}$ and $\mathcal{N}_{2}$ are equivalent if their associated functions $\nu_{1}$ and $\nu_{2}$, respectively, are identically equal.
\end{definition}

In the rest of this section we demonstrate how we can write a function associated to a neural network as difference between two piece-wise linear functions in a recursive way.

\begin{remark} \label{PL}
\begin{itemize}
    \item[(1)]   Let $\mathrm{W}$ be a matrix with \emph{non-negative} entries and let $$ T(x) = \max \{ T_{1}(x), \ldots T_{m}(x) \}$$ be  a piece-wise linear function, i.e., $T_{i} : \mathbb{R}^n \rightarrow \mathbb{R}^m$'s are coordinate wise affine-linear functions. For any vector $b \in \mathbb{R}^{m}$, the function $g(x) = \max \{\mathrm{W}.T(x)+b,t\}$ is a piece-wise linear function. Indeed, we can show that $\mathrm{W}.T(x) = \\ \max \{T_{ij}(x)\}$, where $T_{ij}(x)$'s are affine-linear functions described as follows. 
         
        For simplicity we assume $n=m=2$. We calculate $$\mathrm{W} \max\limits_{i} \{a^{(i)}(x)\},$$ where $a^{(i)}(x) = \begin{pmatrix} a^{(i)}_{1}(x)\\
                               a^{(i)}_{2}(x)
                               \end{pmatrix}$ are coordinate-wise affine-linear functions (may be constant) and the max is taken coordinate-wise. We have
         \begin{align*}
             &\mathrm{W} \max\limits_{i} \{a^{(i)}(x)\}  = \begin{pmatrix}
                                     w_{11} & w_{12} \\
                                     w_{21} & w_{22} 
                               \end{pmatrix} . \max\limits_{i}\{\begin{pmatrix} a^{(i)}_{1}(x)\\
                               a^{(i)}_{2}(x)
                               \end{pmatrix}\} \\
                               & =  \max\limits_{i,j} \{\begin{pmatrix} w_{11}a^{(i)}_{1}(x) + w_{12}a^{(j)}_{2}(x)\\
                               w_{21}a^{(i)}_{1}(x) + w_{22}a^{(j)}_{2}(x)
                               \end{pmatrix}\}
         \end{align*}
          Here the matrix product commutes with the max operation since we assumed that the $w_{ij}$'s are non-negative.
    \item[(2)]   When the $w_{ij}$'s are arbitrary real numbers, there exists a decomposition  $w_{ij} = (w_{ij})_{+} - (w_{ij})_{-}$, where $(w_{ij})_{+} = \max \{ w_{ij}, 0 \}$ and $(w_{ij})_{-} = \max \{-w_{ij}, 0\}$. This lets us write $\mathrm{W} = \mathrm{W}_{+} - \mathrm{W}_{-}$, where  $\mathrm{W}_{+} = ((w_{ij})_{+})$ and $\mathrm{W}_{-} = ((w_{ij})_{-})$. So we have
      
             $$ \mathrm{W}.T(x) = \mathrm{W}_{+}T(x) - \mathrm{W}_{-}T(x) $$\\
                               
          a difference between two piece-wise linear functions.      
          
\end{itemize}

           \end{remark}
    \iffalse
    
    Let $\nu_{L} = \sigma_{L} \circ \rho_{L} \circ \ldots \circ \sigma_{1} \rho_{1}$ be the function associated to the neural network $\mathcal{N}$. We may rewrite it as 
    \begin{equation*}
    \begin{split}
           \nu_{L}(x) & = \max\{ \mathrm{W}^{(L)}\nu_{L-1}(x) + b^{(L)}, t^{(L)} \} \\
        & = \max\{ \mathrm{W}_{+}^{(L)}\nu_{L-1}(x) - \mathrm{W}_{-}^{(L)}\nu_{L-1}(x) + b^{(L)}, t^{(L)} \} \\
        & = \underbrace{\max \{\mathrm{W}_{+}^{(L)}\nu_{L-1}(x) + b^{(L)} , t^{L} + \mathrm{W}_{-}^{(L)}\nu_{L-1}(x) \}}_{(\nu_{L})_{+}(x)} - \underbrace{\mathrm{W}_{-}^{(L)}\nu_{L-1}(x)}_{(\nu_{L})_{-}(x)}. \\
    \end{split}
    \end{equation*}
    
  So that we may write $$(\nu_{L})(x) = (\nu_{L})_{+}(x) - (\nu_{L})_{-}(x).$$ We call the functions $(\nu_{L})_{+}$ and $(\nu_{L})_{-}$ the \emph{positive part} and \emph{negative part} of $\nu_{L}$ respectively.  Note that both $(\nu_{L})_{+}(x)$ and $(\nu_{L})_{-}(x)$ are piece-wise linear functions by Remark \ref{PL}. 
    \fi

\begin{lemma}\label{split}
The function $\nu_{L}(x)$ associated to a  neural network $\mathcal{N}$ can be written as $$(\nu_{L})(x) = (\nu_{L})_{+}(x) - (\nu_{L})_{-}(x),$$
where $(\nu_{L})_{+}$ and $(\nu_{L})_{-},$ are piece-wise linear functions called the \emph{positive part} and \emph{negative part} of $\nu_{L}$ respectively.
\end{lemma}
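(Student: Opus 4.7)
The plan is to proceed by induction on the number of layers $L$, using the algebraic identities set up in Remark \ref{PL} to split off the negative contributions at each layer.

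For the base case $L=1$, I would write $\nu_{1}(x) = \sigma_{1}(\rho_{1}(x)) = \max\{W^{(1)}x + b^{(1)}, t^{(1)}\}$ and apply the decomposition $W^{(1)} = W^{(1)}_{+} - W^{(1)}_{-}$ from Remark \ref{PL}(2). Moving the $-W^{(1)}_{-}x$ term to the other side of the max (which is permitted since adding a common quantity to both arguments preserves the max), I obtain
\begin{equation*}
\nu_{1}(x) = \max\bigl\{W^{(1)}_{+}x + b^{(1)},\; W^{(1)}_{-}x + t^{(1)}\bigr\} \;-\; W^{(1)}_{-}x,
\end{equation*}
so I can take $(\nu_{1})_{+}$ to be the max term (a piece-wise linear function, being a coordinate-wise max of affine-linear functions) and $(\nu_{1})_{-}(x) = W^{(1)}_{-}x$, which is affine hence piece-wise linear.

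For the inductive step, I would assume $\nu_{L-1}(x) = (\nu_{L-1})_{+}(x) - (\nu_{L-1})_{-}(x)$ with both pieces piece-wise linear, and expand
\begin{equation*}
W^{(L)}\nu_{L-1}(x) = (W^{(L)}_{+} - W^{(L)}_{-})\bigl((\nu_{L-1})_{+}(x) - (\nu_{L-1})_{-}(x)\bigr) = A(x) - B(x),
\end{equation*}
where $A(x) := W^{(L)}_{+}(\nu_{L-1})_{+}(x) + W^{(L)}_{-}(\nu_{L-1})_{-}(x)$ and $B(x) := W^{(L)}_{+}(\nu_{L-1})_{-}(x) + W^{(L)}_{-}(\nu_{L-1})_{+}(x)$. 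Since all four matrix factors now have non-negative entries and act on piece-wise linear maxes of affine functions, Remark \ref{PL}(1) guarantees that each of the four summands, and hence $A$ and $B$, are piece-wise linear. I then rearrange
\begin{equation*}
\nu_{L}(x) = \max\bigl\{A(x) - B(x) + b^{(L)},\; t^{(L)}\bigr\} = \max\bigl\{A(x) + b^{(L)},\; B(x) + t^{(L)}\bigr\} - B(x),
\end{equation*}
and define $(\nu_{L})_{+}(x)$ as the max on the right (piece-wise linear as a max of two piece-wise linear expressions) and $(\nu_{L})_{-}(x) := B(x)$.

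The only real subtlety is justifying that $\max\{A+b,\, t\} + B = \max\{A+b,\, t+B\}$ coordinate-wise when $B$ is added to both sides, and that the matrix-times-max commutation in Remark \ref{PL}(1) remains valid when $T(x)$ is itself already a max of affine-linear functions (so the induction threads through). Both reduce to the observation that non-negative scaling and translation commute with coordinate-wise $\max$, so the main obstacle is really just bookkeeping rather than any deep step.
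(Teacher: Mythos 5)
Your proof is correct and follows essentially the same route as the paper: induction on $L$, splitting $W^{(L)} = W^{(L)}_{+} - W^{(L)}_{-}$, and absorbing the subtracted piece-wise linear term into both arguments of the max so that $\nu_{L} = \max\{A + b,\, B + t\} - B$ with $A$ and $B$ piece-wise linear by Remark \ref{PL}. The only cosmetic difference is that the paper treats the base case as trivial (taking the negative part to be zero, since $\max\{W^{(1)}x+b^{(1)},t^{(1)}\}$ is already piece-wise linear), whereas you perform an extra, equally valid, splitting there.
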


\begin{proof}
We proceed by induction on $L$, the number of layers. The base case is trivial. For the inductive step we may assume $$\nu_{L-1} = (\nu_{L-1})_{+} - (\nu_{L-1})_{-},$$ 
where $(\nu_{L-1})_{+}$ and $(\nu_{L-1})_{-}$ are piece-wise linear. On the other hand, we have $\mathrm{W}^{(L)} = \mathrm{W}^{(L)}_{+} - \mathrm{W}^{(L)}_{-}$, where the entries of $\mathrm{W}^{(L)}_{+}$ and $\mathrm{W}^{(L)}_{-}$ are non-negative. In the following simplification we drop the indices to make it more readable
\iffalse
\begin{align*}
    &\nu_{L}(x) \\
    & = \max\{ \mathrm{W}^{(L)}\nu_{L-1}(x) + b^{(L)},t^{(L)} \} \\
    & = \max\{( \mathrm{W}_{+} - \mathrm{W}_{-}) (\nu_{+}(x) - \nu_{-}(x)) + b,t \} \\
    & = \underbrace{\max \{ \mathrm{W}_{+}\nu_{+}(x) + \mathrm{W}_{-}\nu_{-}(x) + b , t + \mathrm{W}_{-}\nu_{+}(x) + \mathrm{W}_{+}\nu_{-}(x) \}}_{(\nu_{L})_{+}(x)} \\
    &\phantom{={}} -  \underbrace{(\mathrm{W}_{-}\nu_{+}(x)   +
    \mathrm{W}_{+}\nu_{-}(x))}_{(\nu_{L})_{-}(x)}.
\end{align*}
\fi
\begin{align*}
    \nu_{L}(x)= &\max\{ \mathrm{W}^{(L)}\nu_{L-1}(x) + b^{(L)},t^{(L)} \} \\
    = &\max\{( \mathrm{W}_{+} - \mathrm{W}_{-}) (\nu_{+}(x) - \nu_{-}(x)) + b,t \} \\
     = &\max \{ \mathrm{W}_{+}\nu_{+}(x) + \mathrm{W}_{-}\nu_{-}(x) + b , \\
     &t + \mathrm{W}_{-}\nu_{+}(x) + \mathrm{W}_{+}\nu_{-}(x) \} \\
    &- (\mathrm{W}_{-}\nu_{+}(x)   +
    \mathrm{W}_{+}\nu_{-}(x))
\end{align*}

By Remark \ref{PL}, both $(\nu_{L})_{+}(x)$ and $(\nu_{L})_{-}(x)$ are piece-wise linear. Hence the lemma.

\end{proof}

To summarize, $\nu_{L}(x) =  (\nu_{L})_{+}(x) - (\nu_{L})_{-}(x)$, where 

\begin{align*}
(\nu_{L})_{+}(x) &= \max \{ \mathrm{W}^{(L)}_{+}(\nu_{L-1})_{+}(x) +\\  &\mathrm{W}^{(L)}_{-}(\nu_{L-1})_{-}(x) + b^{(N)} , t^{(N)} + \\
&\mathrm{W}^{(L)}_{-}(\nu_{L-1})_{+}(x) + \mathrm{W}^{(L)}_{+}(\nu_{L-1})_{-}(x) \}\\
(\nu_{L})_{-}(x) &= (\mathrm{W}^{(L)}_{-}(\nu_{L-1})_{+}(x)   + \mathrm{W}^{(L)}_{+}(\nu_{L-1})_{-}(x))
\end{align*}

The base case (single layer) is given by $$\nu_{1}(x) = \sigma_{1}\circ \rho_{1}(x) = \max \{\mathrm{W}^{(1)}x + b^{(1)}, t^{(1)}\}$$

\begin{remark}\label{key}
\begin{itemize}
    \item[(1)]  Note that the recursive formulae for $\nu_{L}$, along with Remark \ref{PL} allows us to compute $\nu_L$ explicitly in terms of the coefficients of the neural network. More explicitly, we may write 
    %\begin{equation*}\label{rep}
        \begin{align*} 
          (\nu_{L})_{+}(x) &= \max\limits_{k \in S} \{ A_{k}(x) \}\\
          &\text{    and    }\\
          (\nu_{L})_{-}(x) &= \max\limits_{k \in T}\{B_{k}(x)\},
        \end{align*}
    %\end{equation*}
   where $A_k(x) = \sum\limits_{i=1}^{d}a_{ik}x_{i}+a_{0k}$ and $B_k(x) = \sum\limits_{i=1}^{d}b_{ik}x_{i}+b_{0k}$ are affine linear functions whose coefficients are \emph{polynomial expressions}  in the entries of  the weight matrices $\{\mathrm{W}_{+}^{(i)}, \mathrm{W}_{-}^{(i)}\}_{1 \leq i \leq L}$. A formal proof of this is a straightforward application of induction to the recursive formulae above. The recursive formulae also allows us to write these coefficients explicitly.
    \item[(2)] We may treat the coefficients $a_{ik}$ and $b_{ik}$ as formal polynomials whose variables are the entries of the weight matrices $\{\mathrm{W}_{+}^{(i)}, \mathrm{W}_{-}^{(i)}\}_{1 \leq i \leq L}$. This immediately implies that the indices $S$ and $T$ in Remark \ref{key} (1) are dependent only on the network architecture, i.e., the number of layers and the number of nodes in each layer of the neural network. 
\end{itemize}
\end{remark}

\section{Characterizing Equivalent Neural Networks}

In this section we introduce a few notions related to piece-wise linear functions and prove our main theorem. Given a neural architecture, we reduce the problem of characterizing the neural networks $\mathcal{N}$ equivalent to a given neural network $\mathcal{N}_0$ to that of a cancellation problem for piece-wise linear functions. More precisely, it amounts to finding conditions for coefficients of affine linear functions $\{A_{k}(x)\}_{k}$ and $P(x)$ such that 
$$f(x) := \max\limits_{k}\{A_{k}(x), P(x) \}  \equiv \max\limits_{k}\{A_{k}(x)\}$$
We call affine linear terms such as $P(x)$ \emph{redundant} for the function $f(x)$. In theorem \ref{tsapp} we give a criterion for $P(x)$ to be redundant. We further observe that the set of \emph{relevant} terms, i.e., the terms which are not redundant, are unique for a given piece-wise linear function, see theorem \ref{uniquemin}. Using these results we prove that the set of neural networks equivalent to a given neural network $\mathcal{N}_{0}$ is given by a semialgebraic set, see theorem \ref{main}.

\iffalse
--------------------------------------------------------
\begin{definition}[Relevant Index]
Let $f(x) = \max\limits_{k\in S}\{P_{k}(x)\}$ be a piece-wise linear (PL) function. We say that an index $l \in S$ is \emph{relevant} if $f(x_0) \neq \max\limits_{k \in S \setminus \{l\}}\{P_{k}(x_{0})\}$ for some $x_{0} \in \mathbb{R}^{n}$.
We say an affine linear function $Q(x)$ is \emph{redundant} with respect to a piecewise linear function $f(x)$, if for all $x \in \mathbb{R}^n$, $f(x) = \max\{f(x),Q(x)\}$. We say that $f(x) = \max_{k \in S}\{P_{k}(x)\}$ is a minimal representation of $f$ if every index in $S$ is relevant and $P_{i} \neq P_{j}$ for $i \neq j$.  
\end{definition}

\begin{center}
    (vs)
\end{center}
-----------------------------------------------------------

\begin{notation}\label{link-notation}
Let $P_{k}(x) = \sum\limits_{l=1}^{d} a_{kl}x_{l} + a_{k0}$, for $k \in I$, be a finite collection of distinct affine-linear functions with coefficients in $\mathbb{R}$. Fix an index $j \in I$ and let $T_{jk} := \{x \in \mathbb{R}^d : P_{j}(x) \leq P_{k}(x)\}.$ We denote by $\mathcal{S}_j$ the set of coefficients $(a_{kl}) \in \mathbb{R}^{(d+1) \times |I|}$ such that $j \in I$ is a redundant index for $f(x) = \max\limits_{k \in I} \{P_{k}(x)\}$. 
\end{notation}
\fi

\begin{definition}[Redundant index]\label{redindex} 
Let $S$ be a finite set, $P_{k} : \mathbb{R}^d \rightarrow \mathbb{R}$, for $k \in S$ be distinct affine-linear functions given by $P_{k}(x) = \sum\limits_{l=1}^{d} a_{kl}x_{l} + a_{k0}$ and  $f(x) = \max\limits_{k\in S}\{P_{k}(x)\}$, a piece-wise linear function. We say that an index $j \in S$ is \emph{redundant} if the following identical equality of functions is valid $$ f(x) = \max\limits_{k\in S}\{P_{k}(x)\} \equiv   \max\limits_{k\in S, k \neq j} \{P_{k}(x)\}.$$ The indices in $S$ which are not redundant are called \emph{relevant} indices of $f$. We say that a set $J \subset S$ is a \emph{full set of relevant indices} if all the indices in $J$ are relevant and all those in $J^{c}$ are redundant. In the case where $S$ is a full set of relevant indices we say that $f(x)  = \max\limits_{k\in S}\{P_{k}(x)\}$ is a minimal representation of $f$.
\end{definition}

\begin{remark}\label{redundant}
  With the notation in definition \ref{redindex} observe that $j \in S$ is a redundant index for  $f(x) = \max\limits_{k\in S}\{P_{k}(x)\}$ if and only if for every $x \in \mathbb{R}^{d}$ there exists an index $k \in S \setminus \{j\}$ such that $P_{k}(x) \geq P_{j}(x)$. Fix an index $j \in S$ and let $T_{jk} := \{x \in \mathbb{R}^d : P_{j}(x) \leq P_{k}(x)\}.$ We denote by $\mathcal{S}_j$ the set of coefficients $(a_{kl}) \in \mathbb{R}^{(d+1) \times |S|}$ such that $j \in S$ is a redundant index for $f(x) = \max\limits_{k \in S} \{P_{k}(x)\}$. In other words $j \in S$ being redundant is equivalent to the following set equality $$ \mathbb{R}^{d} = \bigcup\limits_{k \in S, k \neq j} T_{jk}.$$ 
\end{remark}

\begin{example}
Consider the piece-wise linear function $f(x) = \max\{ x, 2x, ax \}$. The term $ax$ in $f(x)$ is redundant if and only if $1 < a < 2$. Similarly, we can find conditions for parameters $a_{3}$ and $b_{3}$ such the term $a_{3}x + b_{3}$ in $f(x) = \max\{a_{1}x+b_{1}, a_{2}x+b_{2}, a_{3}x+b_{3}\}$ is redundant. These conditions are necessarily semialgebraic relations by the following theorem. For an explanation of this fact in general, see figure \ref{example}. 
\end{example}

\begin{figure}
\centering
\includegraphics[scale=0.25]{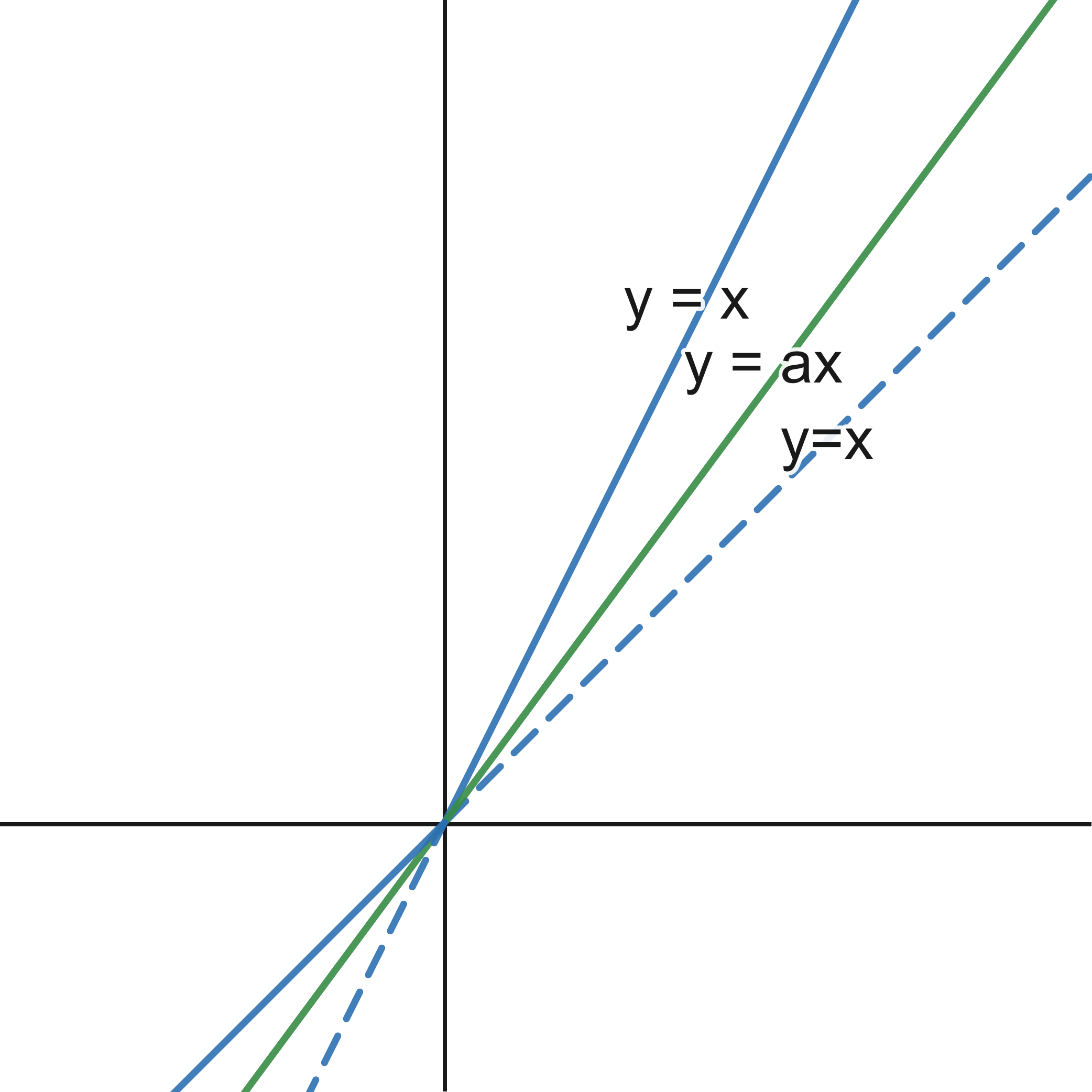}
\caption{The line $y=ax$ in the middle is redundant}
\label{fig_m_4}
\end{figure}

\begin{theorem}\label{tsapp}
Let $h(x) = \max\limits_{k \in S}\{P_{k}(x)\}$ be a piece-wise linear function, i.e. the pieces $P_{k}(x)$ are affine linear for $k \in S$. An affine-linear piece $P_{j}(x)$ of $h(x)$ is redundant if and only if the space of coefficients $\mathcal{S}_{j} $ is a semialgebraic set.
\end{theorem}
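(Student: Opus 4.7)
The plan is to translate the redundancy condition directly into the hypothesis of Proposition~\ref{link} and apply it. By Remark~\ref{redundant}, the index $j \in S$ is redundant for $h(x) = \max_{k \in S}\{P_{k}(x)\}$ if and only if every $x \in \mathbb{R}^d$ lies in at least one of the halfspaces $T_{jk} = \{x : P_{j}(x) \leq P_{k}(x)\}$ with $k \neq j$. Setting
\[ Q_{k}(x) := P_{k}(x) - P_{j}(x) = \sum_{l=1}^{d}(a_{kl} - a_{jl})\,x_{l} + (a_{k0} - a_{j0}), \]
we have $T_{jk} = \{x \in \mathbb{R}^d : Q_{k}(x) \geq 0\}$. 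Thus $j$ is redundant precisely when, for every $x \in \mathbb{R}^d$, there exists $k \in S \setminus \{j\}$ with $Q_{k}(x) \geq 0$.

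This is exactly the hypothesis of Proposition~\ref{link} applied to the family $\{Q_{k}\}_{k \in S \setminus \{j\}}$, so that proposition yields a semialgebraic set $\mathcal{R}$ in the space of coefficients of the $Q_{k}$. The remaining observation is that the map sending $(a_{rl}) \in \mathbb{R}^{(d+1)|S|}$ to the coefficient vector of the $Q_{k}$ is itself affine-linear, since each coefficient of $Q_{k}$ is simply $a_{kl} - a_{jl}$. Substituting these affine expressions into a first-order formula defining $\mathcal{R}$ produces a first-order formula in the variables $(a_{rl})$, and by the Tarski--Seidenberg theorem the resulting set $\mathcal{S}_{j}$ is semialgebraic. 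The converse direction of the biconditional in the statement is tautological: by the very definition of $\mathcal{S}_{j}$, the piece $P_{j}$ is redundant exactly when $(a_{rl}) \in \mathcal{S}_{j}$, regardless of whether $\mathcal{S}_{j}$ is semialgebraic.

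The main obstacle, if one can call it that, is conceptual rather than technical: one must recognize that the covering condition $\mathbb{R}^d = \bigcup_{k \neq j} T_{jk}$ has exactly the quantifier structure $\forall x \in \mathbb{R}^d\,[\,\cdots\,]$ that Proposition~\ref{link} handles, so no case analysis on the configuration of the hyperplanes $P_{j} = P_{k}$ is needed. Once the translation $P_{j} \leq P_{k} \leftrightarrow Q_{k} \geq 0$ is in place, everything reduces to an affine change of variables followed by a single invocation of the proposition, after which pulling the defining first-order formula back through this change of variables finishes the argument.
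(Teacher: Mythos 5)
Your proposal is correct and follows essentially the same route as the paper: both reduce redundancy of $P_j$ to the covering condition $\mathbb{R}^d = \bigcup_{k \neq j} T_{jk}$ and then invoke Proposition~\ref{link}. The only difference is that you make explicit the substitution $Q_k = P_k - P_j$ and the pullback of the resulting semialgebraic condition through this affine change of coordinates, a step the paper's proof leaves implicit.
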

\begin{proof}
A term $P_{j}(x)$ is redundant if and only if for every $x \in \mathbb{R}^d$ there exists an index $k \neq j$ such that $P_{j}(x) \leq P_{k}(x)$. Therefore, $P_{j}(x)$ is redundant if and only if $\mathbb{R}^{d} = \cup_{k \in I, k \neq j} T_{jk} $. In other words,  $$\mathcal{S}_{j} = \{ (a_{kl}) : \mathbb{R}^d =  \bigcup\limits_{k \in I \setminus \{j\}} T_{jk}\}.$$ Therefore, by proposition \ref{link} $\mathcal{S}_{j}$ is a semialgebraic set. 

% But  $\mathbb{R}^{d} = \cup\limits_{k \in I, k \neq j}T_{jk} $ is equivalent to $\mathcal{S}_j$ being semialgebraic by proposition \ref{link}.
\end{proof}

\begin{figure}
\centering
\includegraphics[scale=0.22]{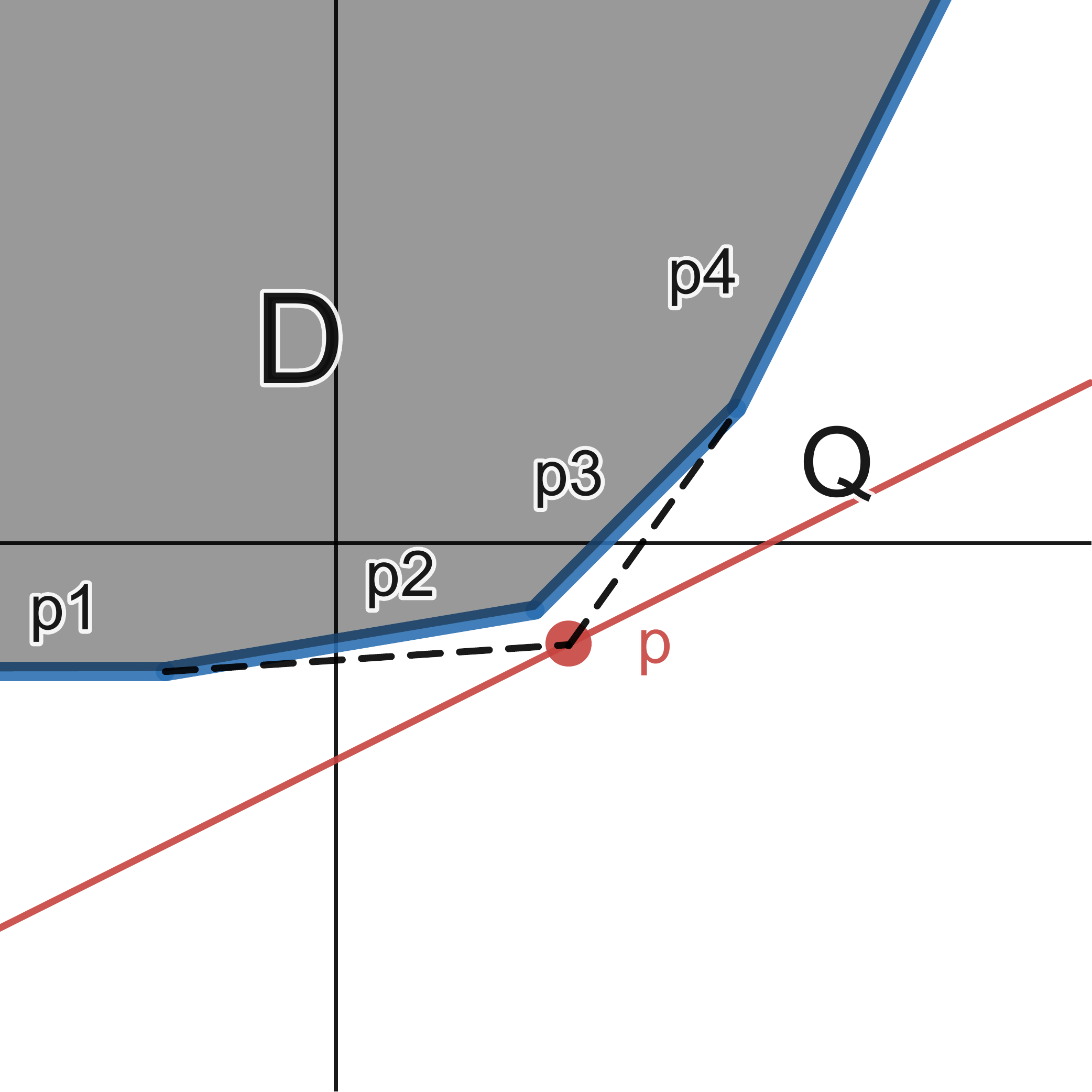}
\caption{The figure represents a piece-wise linear function $f(x) = \max\{p_{1},p_{2},p_{3},p_{4}\}$ in $\mathbb{R}^{2}$. All the pieces are relevant terms in this piece-wise linear representation for $f$. $Q$ is a redundant term for $f(x)$ iff the line represented by $Q$ does not pass through the convex region $D$.\\\\
Fix a point $p$ in the complement of $D$. A line $Q$ passing through $p$ is redundant for $f(x)$ iff $Q$ does not cut the convex hull $<D,P>$ spanned by $D$ and $p$. This easily shows that the slope $m$ of $Q$ for which $Q$ is redundant satisfies a semialgebraic set. Moreover, as we vary $p$ in the complement of $D$ we get another semialgebraic relation involving the intercepts of $Q$. Hence $Q$ is redundant iff the coefficients of $Q$ satisfy a semi-algebraic relation.  } 
\label{example}
\end{figure}

\begin{remark} \label{Wdep}
\begin{itemize}
    \item[(1)] A linear term in a PL function is either redundant or relevant. Since semialgebraic sets form a Boolean algebra, the complement of $\mathcal{S}_{j}$ gives conditions for a term $P_{j}$ to be relevant as well. More explicitly, $j \in S$ is relevant if and only if $\mathcal{N} \in \mathcal{S}^{c}_{j}$.
    \item[(2)]As the coefficients of the linear terms in theorem \ref{tsapp} varies, a given linear term $P_{k}(x)$ may change its status from being relevant to redundant and vice-versa. As a result the number of relevant terms in $h(x)$ may vary with its coefficients.  
\end{itemize}
\end{remark}

\begin{lemma}\label{help}
With the notation in theorem \ref{tsapp}, a subset $K \subset S$ is a full set of relevant indices for $h(x)$ if and only if  $\mathcal{N} \in \mathcal{S}_{K} := (\cap_{k \in K^{c}} \mathcal{S}_{k}) \cap (\cap_{k \in K} \mathcal{S}^{c}_{k})$.
\end{lemma}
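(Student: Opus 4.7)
The plan is to treat this lemma as a careful chase through the definitions, since once the right dictionary is in place the statement becomes a direct logical rearrangement rather than a new claim. The key dictionary entries I would invoke are Theorem \ref{tsapp} together with Remark \ref{Wdep}(1): the set $\mathcal{S}_{j}$ is defined (and shown to be semialgebraic) precisely as the locus of coefficient tuples for which the index $j \in S$ is redundant in $h(x) = \max_{k \in S}\{P_{k}(x)\}$, and consequently $\mathcal{S}_{j}^{c}$ is the locus where $j$ is relevant.

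First I would expand the intersection defining $\mathcal{S}_{K}$. By the standard meaning of intersection, $\mathcal{N} \in (\cap_{k \in K^{c}} \mathcal{S}_{k}) \cap (\cap_{k \in K} \mathcal{S}^{c}_{k})$ is equivalent to the conjunction: $\mathcal{N} \in \mathcal{S}_{k}$ for every $k \in K^{c}$, and $\mathcal{N} \in \mathcal{S}^{c}_{k}$ for every $k \in K$. Applying the dictionary term by term, this translates to: every $k \in K^{c}$ is a redundant index of $h(x)$, and every $k \in K$ is a relevant index. This is verbatim the condition in Definition \ref{redindex} for $K$ to be a full set of relevant indices. Both directions of the biconditional then fall out of the same chain of equivalences, so no separate argument for necessity versus sufficiency is needed.

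I do not anticipate a genuine obstacle: the substantive content, namely that the set of coefficients for which a single index $j$ is redundant admits a concrete semialgebraic description, was already carried out in Theorem \ref{tsapp}, and the present lemma simply packages that information across subsets. The only bookkeeping point worth flagging is that ``redundant'' in Definition \ref{redindex} is phrased one index at a time relative to the full max over $S$, rather than as a property of the subset $K^{c}$ taken jointly; but this matches exactly the way $\mathcal{S}_{K}$ is constructed as an index-wise intersection, so the translation goes through without needing any auxiliary lemma about simultaneous versus individual redundancy.
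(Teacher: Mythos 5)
Your proposal is correct and matches the paper's own proof essentially verbatim: both arguments invoke Theorem \ref{tsapp} and Remark \ref{Wdep} to identify $\mathcal{S}_{k}$ (resp.\ $\mathcal{S}_{k}^{c}$) as the locus where $k$ is redundant (resp.\ relevant), and then unwind the intersection defining $\mathcal{S}_{K}$ into the index-wise conjunction that is precisely Definition \ref{redindex}'s notion of a full set of relevant indices. No further comment is needed.
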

\begin{proof}

By remark \ref{Wdep} we know that $k \in I$ is a relevant index if and only if $\mathcal{N} \in \mathcal{S}^{c}_{k}$. Therefore, every $k \in K$ is a relevant index if and only if $\mathcal{N} \in \cap_{k \in K} \mathcal{S}^{c}_{k}$. Similarly, by theorem \ref{tsapp} every $k \in K^{c}$ is redundant if and only if   $\mathcal{N} \in \cap_{k \in K^{c}} \mathcal{S}_{k}$.  

 By definition $K$ is a full set of relevant indices if and only if every index in $K$ is relevant and every index in $K^{c}$ is redundant. Therefore, $K$ is a full set of relevant indices if and only if $\mathcal{N} \in ( \cap_{k \in K} \mathcal{S}^{c}_{k}) \cap (\cap_{k \in K^{c}} \mathcal{S}_{k})  $.
\end{proof}

\begin{definition}[Corner]
A corner of a piecewise linear function $f(x) = \max\limits_{k\in S}\{P_{k}(x)\}$ is some $x_{0}\in\mathbb{R}^{d}$ such that $f(x_0) = P_{i}(x_0) = P_{j}(x_0)$ for some $i\neq j$. 
%We want to more than this, not all corners qualify as 'true corners' even in single variable case.   
\end{definition}

\begin{figure}
\centering
\includegraphics[scale=0.3]{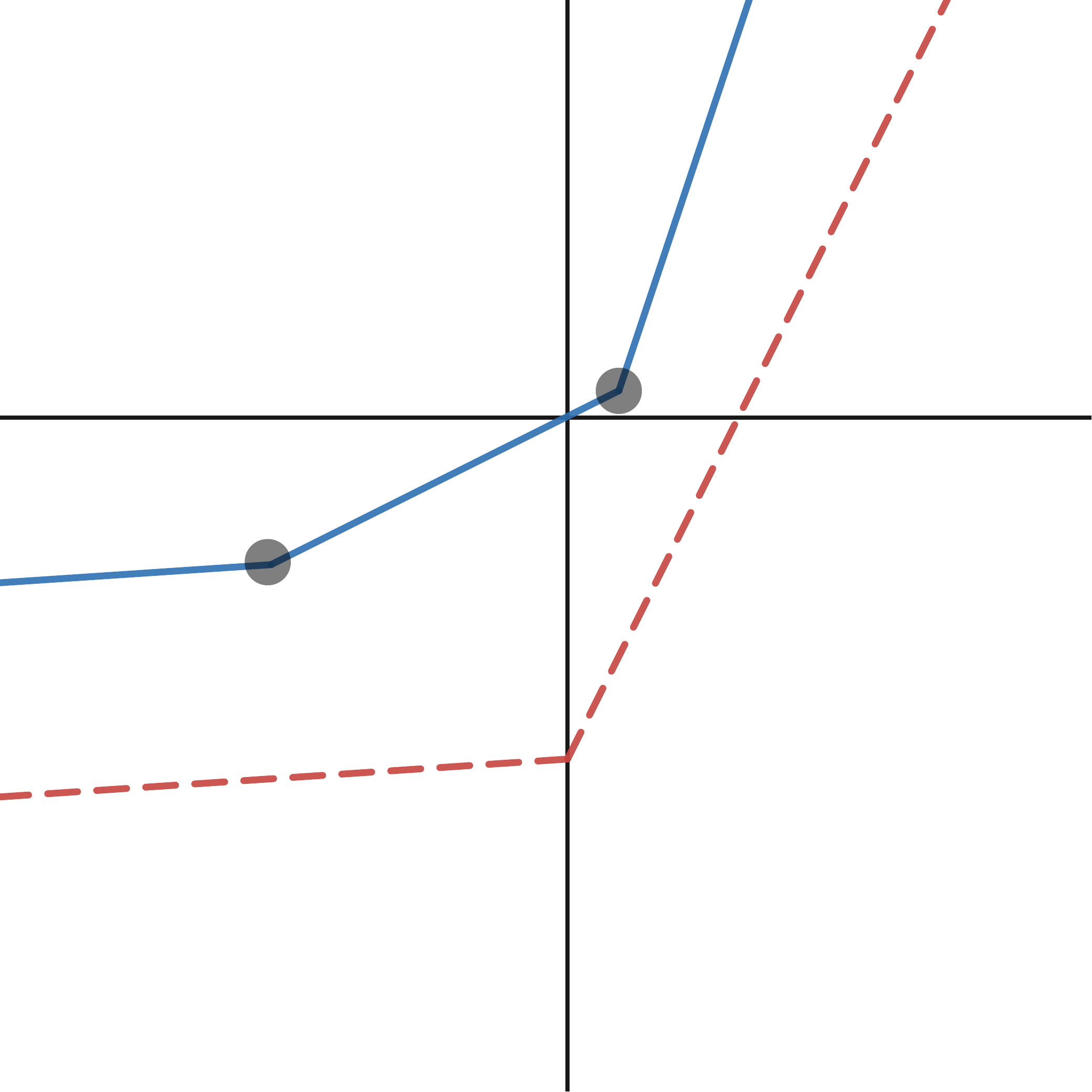}
\caption{This figure represents a piece-wise linear function. The upper pieces represent the relevant terms and the lower dotted lines represent two redundant terms. The circled points are the corners. Note that the point of intersection of the redundant pieces is not a corner.}
\label{fig_m_4}
\end{figure}

\iffalse
We can compose an algorithm for finding all corners in the single-variable case. Assume $x_{1}<x_{2}<...<x_{m}$ are the x-coordinates of the corners. We observe that $x_{1} = min\{\frac{a_{0}-a_{k}}{k}\mid 1\leq k\leq N\}$. We can extend this to see that $x_{2} = min\{\frac{a_{k}-a_{l}}{l-k}\mid l\in \{1,...,N\}\setminus\{k_{1}\} \}$. We can iteratively derive $x_{k}$ in this way.
\fi

\iffalse    
\begin{lemma}
If $ \max\{P,P_{2},\ldots P_{n}\} \equiv \max\{P,Q_{2},\ldots Q_{n}\} $ such that all the affine-linear terms appearing on both sides are relevant  then $ \max\{P_{2},\ldots P_{n}\} \equiv \max\{Q_{2},\ldots Q_{n}\} $.
\end{lemma}
\begin{proof}
Suppose $P_{i}$
\end{proof}
\fi

\begin{theorem}\label{uniquemin}
Every piecewise linear function has a unique minimal representation. 
\end{theorem}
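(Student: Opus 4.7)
Suppose, toward proving uniqueness, that a piecewise linear function $f$ admits two minimal representations
\[
f(x) = \max_{k \in S}\{P_{k}(x)\} = \max_{j \in T}\{Q_{j}(x)\},
\]
with the $P_{k}$ pairwise distinct affine functions and each $k \in S$ relevant, and similarly for the $Q_{j}$. My plan is to show that the two collections $\{P_{k}\}_{k\in S}$ and $\{Q_{j}\}_{j\in T}$ coincide as sets of affine functions, from which the uniqueness (up to re-indexing) of the minimal representation follows.

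The first step is to translate relevance into a geometric statement. By Remark~\ref{redundant}, an index $k \in S$ is relevant precisely when the condition ``for every $x$, some $P_{i}$ with $i\neq k$ satisfies $P_{i}(x) \geq P_{k}(x)$'' fails. So there exists $x_{0}$ with $P_{k}(x_{0}) > P_{i}(x_{0})$ for all $i \neq k$. Because each $P_{i}$ is continuous and $S$ is finite, this strict inequality persists on an open neighborhood $U_{k}$ of $x_{0}$, on which $f \equiv P_{k}$ and $P_{k}$ strictly dominates every other $P_{i}$.

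Next I will exploit the second representation on the open set $U_{k}$. Since $f(x) = \max_{j \in T} Q_{j}(x) = P_{k}(x)$ for all $x \in U_{k}$, we have $Q_{j}(x) \leq P_{k}(x)$ on $U_{k}$ for every $j$, and equality is attained at each point by at least one $j$. Thus
\[
U_{k} = \bigcup_{j \in T} V_{j}, \qquad V_{j} := U_{k} \cap \{x : Q_{j}(x) = P_{k}(x)\}.
\]
For each $j$, $V_{j}$ is the intersection of $U_{k}$ with the zero set of the affine function $Q_{j} - P_{k}$; if $Q_{j} \not\equiv P_{k}$, this zero set is contained in a proper affine hyperplane and hence has empty interior. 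Since $U_{k}$ is open and is covered by finitely many such sets, a Baire-style argument forces at least one $V_{j}$ to have nonempty interior, which forces $Q_{j} \equiv P_{k}$ as affine functions.

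This produces, for every relevant $k \in S$, an index $j \in T$ with $Q_{j} = P_{k}$. By the symmetric argument each $Q_{j}$ equals some $P_{k}$, so the sets $\{P_{k} : k\in S\}$ and $\{Q_{j} : j\in T\}$ agree, proving the minimal representation is unique up to relabeling. The main obstacle I anticipate is clearly articulating step two — extracting from the failure of the redundancy condition a point of strict domination and then passing to an open neighborhood; once that is in hand, the rest is a clean covering argument using only the fact that two distinct affine functions cannot agree on an open set.
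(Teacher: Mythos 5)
Your proof is correct and follows essentially the same route as the paper's: both locate an open set on which $f$ coincides with a single relevant piece $P_{k}$ (you via a point of strict domination, the paper via $D_{k}$ minus its corners) and then invoke the fact that two affine functions agreeing on a nonempty open set are globally equal. Your version is in fact slightly more careful, since where the paper simply asserts that near each such point $P_{k}(z) = Q_{l}(z)$ for some single $l$, you justify this with the covering argument that finitely many affine hyperplanes cannot cover an open set.
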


\begin{proof}
Let $f(x) = \max\limits_{k\in S}\{P_{k}(x)\} = \max\limits_{k\in T}\{Q_{k}(x)\}$ be minimal representations of $f$. In particular, this means that $S$ and $T$ are full set of relevant indices of $f$. Let $D_{k} :=\{x\in\mathbb{R}^{n}\mid f(x) = P_{k}(x)\}$ and $D^{\prime}_{k} := D_{k}\setminus\{\text{corners}\}$. Since every index is relevant, $D_{k}' \subset \mathbb{R}^d$ is a non-empty open subset. Similarly $f(z) = Q_{l}(z)$ in some open subset $E^{\prime}_{l} \subset \mathbb{R}^d$ containing $x$.   

Therefore, for all $x \in D^{\prime}_{k}$ there exists a neighbourhood $B_{x}$ such that $P_{k}(z) = Q_{l}(z) \text{ for some } k \in S, l\in T, \forall z\in B_{x}$.  Since $B_{x}$ is an open neighbourhood and $P_{k}$ and $Q_{l}$ are affine-linear functions in $\mathbb{R}^{d}$, $P_{k}(z) = Q_{l}(z), \forall z \in \mathbb{R}^{d}$.
\end{proof}

\iffalse
\begin{proof}
Let \begin{equation}
    f(x) = \max\limits_{k\in S}\{P_{k}(x)\} \equiv \max\limits_{k\in T}\{Q_{k}(x)\}
\end{equation} be minimal representations of $f$. Let $i \in S$ and $j \in T$, define $$C_{ij} := \{ x \in \mathbb{R}^d :  P_{i}(x) = Q_{j}(x) \}. $$
Observe that for every $x \in \mathbb{R}^{d}$, there exists $i \in S$ and $j \in T$ such that $x \in C_{ij}$. Therefore, $\bigcup\limits_{i,j} C_{ij} =  \{x \in \mathbb{R}^d : \max\limits_{k\in S}\{P_{k}(x)\} = \max\limits_{k\in T}\{Q_{k}(x)\} \} = \mathbb{R}^d$.
\end{proof}
\fi

\begin{example}
For a single layer neural network each equivalence class has a unique neural network. This is a consequence of Theorem \ref{uniquemin}. The piecewise linear interpretation of this is the observation that if $\max\{ax+b,t\} = \max\{cx+d,s\}$ for all $x \in \mathbb{R}^{n}$ then $a = c, b=d$, and $t=s$.
\end{example}

For the rest of this section, fix a neural network $\mathcal{N}_{0} = (W_{0}^{(k)},b_{0}^{(k)},t_{0}^{(k)})_{1 \leq k \leq L}$, where $W_{0}^{(k)} = ((w_{0})^{(k)}_{ij})$, with associated function $f_{0}$. We also assume that all the neural networks appearing in this section have the same neural architecture, i.e., the number of layers $L$ and the number of nodes in each layer remain the same.  We denote the space of all neural networks $\mathcal{N} = (W^{(k)},b^{(k)},t^{(k)})_{1 \leq k \leq L}$ equivalent to $\mathcal{N}_{0}$ by $\mathrm{NN}_{f_{0}}$. Suppose $f$ is the function associated to a $\mathcal{N} \in \mathrm{NN}_{f_{0}}$, then by definition $f \equiv f_{0}$. Using lemma \ref{split} we have $f_{+} - f_{-} \equiv (f_{0})_{+} - (f_{0})_{-}$ and hence
\begin{align}\label{plrelation}
    f_{+} + {(f_{0})}_{-} \equiv f_{-} + (f_{0})_{+}
\end{align}
where $f_{+}(x) = \max\limits_{k}\{P_{k}(x)\}$, $f_{-}(x) = \max\limits_{k}\{Q_{k}(x)\}$ and $(f_{0})_{+}(x) = \max\limits_{k}\{N_{k}(x)\}$, $(f_{0})_{-}(x) = \max\limits_{k}\{M_{k}(x)\}$ are piece-wise linear functions. By equations \ref{rec} in section 3 the affine linear functions $P_{k}, Q_{k}, N_{k}$ and $M_{k}$ have coefficients in terms of the data $\mathcal{N} = (W^{(k)},b^{(k)},t^{(k)})_{1 \leq k \leq L}$ and $\mathcal{N}_{0} = (W_{0}^{(k)},b_{0}^{(k)},t_{0}^{(k)})_{1 \leq k \leq L}$. It is easy to see that equation \ref{plrelation} can be rewritten as  
\begin{align} \label{PLeq}
    \max\limits_{(k,l) \in I}\{A_{k,l}(x)\} = \max\limits_{(k,l) \in J}\{B_{k,l}(x)\}
\end{align}
where $A_{k,l}(x) = P_{k}(x) + N_{l}(x) $ and $B_{k,l}(x) = Q_{k}(x) + M_{l}(x)$.

\begin{remark}
The indices $I$ and $J$ are determined by the neural architecture. In other words, for a given neural architecture the number of linear pieces in both sides of equation \ref{PLeq} is independent of $\mathcal{N}$.
\end{remark}

From the above discussion a neural network $\mathcal{N}$ is equivalent to $\mathcal{N}_{0}$ if and only if equation \ref{PLeq} is satisfied. In other words, finding the space of neural networks $\mathcal{N}$ equivalent to a fixed neural network $\mathcal{N}_{0}$ is the same as finding conditions on the coefficients of $A_{k,l}(x)$ and $B_{k,l}(x)$ such that equation \ref{PLeq} is satisfied. In what follows we prove that the coefficients of $A_{k,l}(x)$ and $B_{k,l}(x)$ are given by a semialgebraic set using the Tarski-Seidenberg theorem.

By theorem \ref{uniquemin}, for any given $\mathcal{N}$ the minimal representations of both sides of equation \ref{PLeq} coincide. Therefore, there could be redundant terms among $\{A_{kl}(x)\}$ and $\{B_{kl}(x)\}$. The possibility of such redundancy is the crux of the argument behind the main theorem.

\begin{remark}\label{bij}
  Considering Eq.(\ref{PLeq}), for a given $\mathcal{N}$, theorem \ref{uniquemin} implies that there exists subsets $K \subset I$ , $K' \subset J$, and a bijection $\sigma : K \rightarrow K'$ such that $A_{k}(x) \equiv B_{\sigma(k)}(x)$ for every $k \in K$. These $K$, $K'$ and $\sigma$ vary as $\mathcal{N}$ varies, see remark \ref{Wdep}.
\end{remark}

\begin{lemma}\label{bridge}
Let $P(x_{1}, \ldots, x_{n})$ be a polynomial over $\mathbb{R}$. The set $$S_{P} := \{(a_1, \ldots a_{n}) \in \mathbb{R}^n : P(\max\{a_{1},0\} \ldots \max\{a_{n},0\}) \geq 0\}$$ is semialgebraic. 
\end{lemma}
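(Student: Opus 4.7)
The plan is to realize $S_P$ as a finite union of sets cut out by polynomial inequalities, by case-splitting on the signs of the coordinates $a_i$. The function $a \mapsto \max\{a,0\}$ is piecewise linear with only two pieces, so the nonlinearity of $P \circ (\max, \ldots, \max)$ dissolves once we fix, for each $i$, whether $a_i \geq 0$ or $a_i < 0$.

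Concretely, for every subset $I \subseteq \{1, \ldots, n\}$ let $P_I(x_i : i \in I)$ denote the polynomial obtained from $P(x_1, \ldots, x_n)$ by substituting $x_j = 0$ for each $j \notin I$. On the orthant
\begin{equation*}
   C_I := \{a \in \mathbb{R}^n : a_i \geq 0 \text{ for } i \in I,\ a_i < 0 \text{ for } i \notin I\},
\end{equation*}
we have $\max\{a_i, 0\} = a_i$ if $i \in I$ and $\max\{a_i, 0\} = 0$ if $i \notin I$, so
\begin{equation*}
   P(\max\{a_1,0\}, \ldots, \max\{a_n,0\}) = P_I(a_i : i \in I)
\end{equation*}
for all $a \in C_I$. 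First I would verify this identity, and then write
\begin{equation*}
   S_P = \bigsqcup_{I \subseteq \{1,\ldots,n\}} \bigl(C_I \cap \{a \in \mathbb{R}^n : P_I(a_i : i \in I) \geq 0\}\bigr).
\end{equation*}
Each orthant $C_I$ is clearly semialgebraic (it is defined by finitely many linear inequalities), the condition $P_I \geq 0$ is a single polynomial inequality in the coordinates $a_i$, $i \in I$, and the class of semialgebraic sets is closed under finite intersection and finite union. Hence $S_P$ is semialgebraic.

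Alternatively — and this is really the same argument phrased via the tool already set up — one could write the first-order formula
\begin{equation*}
   \Phi(a_1,\ldots,a_n) := \exists y_1 \cdots \exists y_n \Bigl[\bigwedge_{i=1}^n \bigl((y_i = a_i \land a_i \geq 0) \lor (y_i = 0 \land a_i \leq 0)\bigr) \land P(y_1,\ldots,y_n) \geq 0\Bigr]
\end{equation*}
and invoke the Tarski-Seidenberg theorem directly to conclude that $\{a : \Phi(a)\} = S_P$ is semialgebraic. There is no real obstacle here; the only thing to be careful about is the case $I = \emptyset$, where $P_I$ is the constant $P(0,\ldots,0)$ and the corresponding summand is either all of $C_\emptyset$ (when $P(0) \geq 0$) or empty, which is consistent with the formula.
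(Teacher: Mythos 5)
Your proof is correct and follows essentially the same route as the paper: partition $\mathbb{R}^n$ into orthants according to the signs of the $a_i$, observe that on each orthant the composite $P(\max\{a_1,0\},\ldots,\max\{a_n,0\})$ reduces to an honest polynomial in the coordinates, and take the finite union. Your version is if anything slightly more explicit than the paper's (which merely asserts that $S_P \cap R(m_1,\ldots,m_n)$ is semialgebraic without writing down $P_I$), and the Tarski--Seidenberg reformulation you add is a valid equivalent phrasing.
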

\begin{proof}
\iffalse
We proceed by induction. For $n=1$, in the region $R_{1} = \{a : a \geq 0 \})$, we have $P(\max\{a,0\}) = P(a)$. hence $S_{P} \cap R_{1}$ is semialgebraic. Similarly, in $R_{2} =  \{a : a < 0 \}) $, we have $P(\max\{a,0\}) = P(0)$ making $S_{P} \cap R_{2} = \emptyset$ or $R_{2}$ which is semialgebraic as well. Therefore $S_{P} = (S_{P} \cap R_{1}) \cup (S_{P} \cap R_{2})$ is semialgebraic. 
Suppose the statement is true for any $n$ variable polynomial. Let $P(x_{1}, \ldots x_{n+1})$ be a polynomial in $n+1$ variables. 
\fi
Let $m_1, \ldots, m_n \in \{0 ,1\}$ and $R(m_1, \ldots, m_n) := \{(a_{1}, \ldots, a_{n}) \in \mathbb{R}^n : a_{i} > 0 \text{ if } m_{i} = 1 \text{ and  }  a_{i} \leq 0 \text{ if } m_{i} = 0\} .$
Notice that $R(m_1, \ldots, m_n) \subset \mathbb{R}^n$ are the \emph{orthants} of $\mathbb{R}^n$. Observe that $S_P \cap R(m_{1}, \ldots m_{n})$ is semialgebraic for any choice of $m_{i} \in \{0,1\}$. Therefore, $S_P = \bigcup_{(m_1, \ldots, m_n) \in \{0,1\}^{n}} (S_P \cap R(m_1, \ldots, m_n))$ is semialgebraic.
\end{proof}

\begin{theorem}\label{main}
The space of coefficients of neural networks equivalent to a given neural network $\mathcal{N}_{0}$ is given by a semialgebraic set denoted by $\mathcal{S}_{\mathcal{N}_{0}}$.  
\end{theorem}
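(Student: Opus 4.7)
The plan is to decompose $\mathcal{S}_{\mathcal{N}_{0}}$ as a \emph{finite} union over all possible ``matching patterns'' between the two sides of equation (\ref{PLeq}), show that each piece of the union is semialgebraic as a condition on the coefficients of the affine pieces $\{A_{k,l}\}$ and $\{B_{k,l}\}$, and finally use Lemma \ref{bridge} to transfer semialgebraicity from the coefficient space of these affine pieces down to the raw parameter space of $(W^{(k)},b^{(k)},t^{(k)})$. By the discussion preceding the theorem, $\mathcal{N} \in \mathrm{NN}_{f_{0}}$ if and only if (\ref{PLeq}) holds as an identity of functions.

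By Theorem \ref{uniquemin}, both sides of (\ref{PLeq}) share a unique minimal representation, so (see Remark \ref{bij}) there exist $K \subset I$, $K' \subset J$, and a bijection $\sigma : K \to K'$ with: (a) $K$ is a full set of relevant indices for the left-hand side of (\ref{PLeq}); (b) $K'$ is a full set of relevant indices for the right-hand side; and (c) $A_{k,l}(x) \equiv B_{\sigma(k,l)}(x)$ for every $(k,l) \in K$. Letting $\mathcal{T}_{K,K',\sigma}$ denote the set of $\mathcal{N}$ satisfying (a)--(c), I would write
\[ \mathcal{S}_{\mathcal{N}_{0}} \;=\; \bigcup_{K,\,K',\,\sigma} \mathcal{T}_{K,K',\sigma}. \]
By Remark \ref{key}(2) and the remark after (\ref{PLeq}), the index sets $I$ and $J$ are determined by the architecture alone, so this union is finite and it suffices to show each $\mathcal{T}_{K,K',\sigma}$ is semialgebraic.

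For a fixed triple $(K,K',\sigma)$, conditions (a) and (b) are precisely the hypotheses of Lemma \ref{help} applied to the two sides of (\ref{PLeq}); combined with Theorem \ref{tsapp}, they define a semialgebraic subset of the coefficient space of $\{A_{k,l}\}$ and $\{B_{k,l}\}$. Condition (c), the identical equality of the affine-linear functions $A_{k,l}$ and $B_{\sigma(k,l)}$ on $\mathbb{R}^{d}$, is equivalent to finitely many polynomial equalities between the coefficients of $A_{k,l}$ and $B_{\sigma(k,l)}$, and therefore cuts out a Zariski-closed, hence semialgebraic, subset of the same coefficient space. Intersecting the three semialgebraic conditions gives a semialgebraic description of $\mathcal{T}_{K,K',\sigma}$ at the level of coefficients of $A_{k,l}, B_{k,l}$.

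The main obstacle, and the step that requires Lemma \ref{bridge}, is pulling this description back to the space of raw parameters $(W^{(k)},b^{(k)},t^{(k)})$. By Remark \ref{key}(1), the coefficients of $A_{k,l}$ and $B_{k,l}$ are polynomial expressions in the truncated entries $(w^{(i)}_{rs})_{+} = \max\{w^{(i)}_{rs},0\}$ and $(w^{(i)}_{rs})_{-} = \max\{-w^{(i)}_{rs},0\}$ (together with the $b^{(i)}, t^{(i)}$ entries, which enter polynomially). Hence every atomic polynomial (in)equality $P \mathrel{\triangleright} 0$ in the coefficients pulls back to a condition of the form $\widetilde{P}\bigl(\max\{v_{1},0\},\ldots,\max\{v_{n},0\},\,u_{1},\ldots,u_{m}\bigr) \mathrel{\triangleright} 0$ on the raw parameters, where $v_{\bullet}$ ranges over the $w$-entries and $u_{\bullet}$ ranges over the $b,t$-entries. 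Lemma \ref{bridge} (with its orthant-decomposition proof extended verbatim by treating the $u_{j}$ as additional polynomial variables) shows that each such set is semialgebraic. Taking Boolean combinations preserves semialgebraicity, so the pullback of $\mathcal{T}_{K,K',\sigma}$ is semialgebraic, and the finite union over $(K,K',\sigma)$ yields $\mathcal{S}_{\mathcal{N}_{0}}$ as a semialgebraic set.
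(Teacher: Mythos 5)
Your proposal is correct and follows essentially the same route as the paper's own proof: the same decomposition of $\mathcal{S}_{\mathcal{N}_{0}}$ into a finite union over triples $(K,K',\sigma)$ via Theorem \ref{uniquemin} and Remark \ref{bij}, the same use of Lemma \ref{help} for the relevance conditions and of coefficient-matching equalities for the bijection, and the same final appeal to Lemma \ref{bridge} to pass from the $W_{\pm}$-entries back to the raw parameters. Your explicit remarks that the union is finite because $I$ and $J$ depend only on the architecture, and that Lemma \ref{bridge} needs the harmless extension to extra polynomial variables for the $b^{(k)},t^{(k)}$ entries, are welcome clarifications of steps the paper leaves implicit.
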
    
\begin{proof}
If $\mathcal{N}$ and $\mathcal{N}_0$ are equivalent we argued that Eq.(\ref{PLeq}) is true. By lemma \ref{help}, $K \subset I$ is a full set of relevant indices of the LHS of Eq.(\ref{PLeq}) if and only if $\mathcal{N} \in \mathcal{S}_{K}$. Similarly, $K' \subset J$ is a full set of relevant indices of the RHS if and only if $\mathcal{N} \in \mathcal{S}_{K'}$. Further, by remark \ref{bij}, if $A_{k}(x) = \sum_{l=1}^{d}a_{kl}x_{l} + a_{k0}$ and $B_{k}(x) = \sum_{l=1}^{d}b_{kl}x_{l} + b_{k0}$ then $a_{kl} = b_{\sigma(k)l}$ for every $k \in K$. This implies that $\mathcal{N} \in V(K,K',\sigma) := \{ \mathcal{N} :  a_{kl} = b_{\sigma(k)l}, \text{ for every } k \in K \}$. Therefore, $\mathcal{N} \in \mathcal{S}_{K,K',\sigma} : = \mathcal{S}_{K} \cap \mathcal{S}_{K'} \cap V(K,K',\sigma)$

Conversely, suppose $\mathcal{N}$ is such that there exists $K \subset I$, $K' \subset J$, and a bijection $\sigma : K \rightarrow K'$ so that $K$ and $K'$ are the full set of relevant indices for the LHS and RHS of Eq.(\ref{PLeq}) respectively. Also suppose that the coefficients of the linear terms corresponding to $K$ and $K'$ coincide with respect to the ordering given by $\sigma$, i.e., $\mathcal{N} \in V_{K,K',\sigma}$. Then Eq.(\ref{PLeq}) is valid. 
    
    We denote $$\mathcal{S}_{\mathcal{N}_{0}} := \bigcup\limits_{K \subset I, K' \subset J, \sigma} \mathcal{S}_{K,K',\sigma}$$ which is evidently a semialgebraic set. By the above discussion Eq.(\ref{PLeq}) is valid if and only if $\mathcal{N} \in \mathcal{S}_{\mathcal{N}_{0}}$.
    
    Now, remark that $\mathcal{S}_{\mathcal{N}_{0}}$ is a semialgebraic set defined by polynomials whose variables are entries of the matrices $W^{(k)}_{+}, W^{(k)}_{-}, b^{(k)}, t^{(k)}$, for each layer $k$. By lemma \ref{bridge} we may conclude that the defining polynomials for $\mathcal{S}_{\mathcal{N}_{0}}$ have in fact the entries of $W^{(k)}, t^{(k)}$ and $b^{(k)}$ as variables.
\end{proof}

\iffalse
---------------------------------------------------------------------------
We obtain a canonical stratification, indexed by the power set of $S$, of the space of coefficients as follows.

\begin{theorem}
Let $\mathcal{S}$ denote the space of coefficients of neural networks equivalent to a given neural network $\mathcal{N}_{0}$. We have a decomposition 
\begin{equation*}
    \mathcal{S} = \bigcup\limits_{T \in 2^{S}} \mathcal{S}_{T}
\end{equation*}
where $\mathcal{S}_T$ is the semialgebraic relation defined by the relevance of $T$ and redundancy of the complement of $T$ in equation \ref{plrelation}.  
\end{theorem}
\begin{proof}
For a given set of coefficients $W$, we always have a set of relevant indices $T$ in $S$. Therefore $\mathcal{S} \subset  \bigcup\limits_{T \in 2^{S}} \mathcal{S}_{T}$. (Elaborate later)
\end{proof}

\begin{Corollary}
The space $\mathcal{S}$ of neural networks equivalent to a given neural network $\mathcal{N}_{0}$ is a semialgebraic set. 
\end{Corollary}
----------------------------------------------------------------------------
\fi

\section{Conclusion and Further Study}

In this article we defined a neural network on a given neural architecture as the set of coefficients, i.e., weight matrices, bias and threshold vectors with real number entries. In other words a neural network for us is a tuple of matrices and vectors, essentially a point in $\mathbb{R}^N$ for some appropriate $N$. We introduced a natural equivalence relation between two neural networks on a given neural architecture by saying that they are equivalent if their associated functions are identically equal. We were able to show that the equivalence classes of neural networks with a fixed neural architecture are given by semialgebraic sets. 

One natural computational question is to find a set of defining polynomials for $\mathcal{S}_{\mathcal{N}_0}$ in the main theorem. We believe that a modified version of \emph{cylindrical algebraic decomposition} can be used to compute such polynomials. By construction the semialgebraic set obtained in the end only depends on the coefficients of $f_0$ and the network architecture. We may use quantifier elimination algorithms already existing in computer algebra  for a given neural architecture.  We intend to pursue this in the near future. 

Another direction of study is how $\mathcal{S}_{\mathcal{N}_0}$ varies in families, i.e., if we allow the coefficients of $f_{0}$ to vary then the $\mathcal{S}_{\mathcal{N}_0}$ vary as families of semialgebraic sets parameterized by coefficients of $f_0$. One invariant we may compute is the dimension of $\mathcal{S}_T$ (and see how it varies in families), although computing analogues of Hilbert polynomials for semialgebraic sets would be more general.

\bibliographystyle{unsrt}
\bibliography{researchRefs}

\begin{thebibliography}{10}

\bibitem{bojarski2016end}
Mariusz Bojarski, Davide Del~Testa, Daniel Dworakowski, Bernhard Firner, Beat
  Flepp, Prasoon Goyal, Lawrence~D Jackel, Mathew Monfort, Urs Muller, Jiakai
  Zhang, et~al.
\newblock End to {E}nd {L}earning for {S}elf-{D}riving {C}ars.
\newblock {\em arXiv preprint arXiv:1604.07316}, 2016.

\bibitem{zakaryazad2016profit}
Ashkan Zakaryazad and Ekrem Duman.
\newblock A profit-driven {A}rtificial {N}eural {N}etwork (ann) with
  applications to fraud detection and direct marketing.
\newblock {\em Neurocomputing}, 175:121--131, 2016.

\bibitem{amato2013artificial}
Filippo Amato, Alberto L{\'o}pez, Eladia~Mar{\'\i}a Pe{\~n}a-M{\'e}ndez, Petr
  Va{\v{n}}hara, Ale{\v{s}} Hampl, and Josef Havel.
\newblock Artificial neural networks in medical diagnosis, 2013.

\bibitem{covington2016deep}
Paul Covington, Jay Adams, and Emre Sargin.
\newblock {D}eep {N}eural {N}etworks for {Y}ou{T}ube {R}ecommendations.
\newblock In {\em Proceedings of the 10th ACM conference on recommender
  systems}, pages 191--198, 2016.

\bibitem{roscher2020explainable}
Ribana Roscher, Bastian Bohn, Marco~F Duarte, and Jochen Garcke.
\newblock {E}xplainable {M}achine {L}earning for {S}cientific {I}nsights and
  {D}iscoveries.
\newblock {\em IEEE Access}, 8:42200--42216, 2020.

\bibitem{bhatt2020explainable}
Umang Bhatt, Alice Xiang, Shubham Sharma, Adrian Weller, Ankur Taly, Yunhan
  Jia, Joydeep Ghosh, Ruchir Puri, Jos{\'e}~MF Moura, and Peter Eckersley.
\newblock {E}xplainable {M}achine {L}earning in {D}eployment.
\newblock In {\em Proceedings of the 2020 Conference on Fairness,
  Accountability, and Transparency}, pages 648--657, 2020.

\bibitem{ribeiro2016should}
Marco~Tulio Ribeiro, Sameer Singh, and Carlos Guestrin.
\newblock ``{W}hy {S}hould {I} {T}rust {Y}ou?" {E}xplaining the {P}redictions
  of {A}ny {C}lassifier.
\newblock In {\em Proceedings of the 22nd ACM SIGKDD international conference
  on knowledge discovery and data mining}, pages 1135--1144, 2016.

\bibitem{arora2016understanding}
Raman Arora, Amitabh Basu, Poorya Mianjy, and Anirbit Mukherjee.
\newblock {U}nderstanding {D}eep {N}eural {N}etworks with {R}ectified {L}inear
  {U}nits.
\newblock {\em arXiv preprint arXiv:1611.01491}, 2016.

\bibitem{zhang2018tropical}
Liwen Zhang, Gregory Naitzat, and Lek-Heng Lim.
\newblock {T}ropical {G}eometry of {D}eep {N}eural {N}etworks.
\newblock {\em arXiv preprint arXiv:1805.07091}, 2018.

\bibitem{coste2000introduction}
Michel Coste.
\newblock An introduction to semialgebraic geometry, 2000.

\bibitem{mishra1993algorithmic}
Bhubaneswar Mishra.
\newblock {A}lgorithmic {A}lgebra.
\newblock In {\em Texts and monographs in computer science series}, page 416.
  Springer Verlag, 1993.

\bibitem{schmidhuber2015deep}
J{\"u}rgen Schmidhuber.
\newblock {D}eep {L}earning in {N}eural {N}etworks: {A}n {O}verview.
\newblock {\em Neural networks}, 61:85--117, 2015.

\end{thebibliography}

\end{document}